\newtheorem{thm}{Theorem}
\newcommand{\norm}[1]{\left\lVert#1\right\rVert}
\newcommand{\algo}{DMEG}
\newcommand{\nr}{\mathbb{R}}
\newcommand{\nE}{\mathbb{E}}
\newcommand{\lamoptim}{\lambda^*}
\newcommand{\Strategy}{\mathcal{S}}
\newcommand{\strategy}{\mathbf{S}}
\newcommand{\cX}{\mathcal{X}} 
\newcommand{\cY}{\mathcal{Y}} 
\newcommand{\choS}{\mathcal{B}} 
\newcommand{\obs}{x} 
\newcommand{\cho}{b} 
\newcommand{\ml}{u} 
\newcommand{\cl}{c} 
\newcommand{\eqdef}{\triangleq}
\title{ Deep  Online Learning with Stochastic Constraints}
\author{
  Guy Uziel \\
  Department of Computer Science\\
  Technion - Israel Institute of Technology\\
}
\begin{document}

\maketitle
\begin{abstract} 
Deep learning models are considered to be state-of-the-art in many offline machine learning tasks. However, many of the techniques developed are not suitable for online learning tasks. The problem of using deep learning models with sequential data becomes even harder when several loss functions need to be considered simultaneously, as in many real-world applications.   
In this paper, we, therefore, propose a novel online deep learning training procedure which can be used regardless of the neural network's architecture, aiming to deal with the multiple objectives case.
We  demonstrate and show the effectiveness of our algorithm on the  Neyman-Pearson classification problem on several benchmark datasets.

\end{abstract}

\section{Introduction}

In many real-world applications, one has to consider the minimization of several loss functions simultaneously, which is, of course, an impossible mission. Therefore,  one objective is chosen as the primary function to minimize, leaving the others to be bound by pre-defined thresholds.  For example, in online portfolio selection \cite{BorodinE2005},  the ultimate goal is to maximize the wealth of the investor while keeping the risk bounded by a user-defined constant.  In the Neyman-Pearson (NP) classification  (see, e.g., \cite{RigolletT2011}), an extension of the classical binary classification,  the goal is to learn a classifier achieving low type-II error whose type-I error is kept below a given threshold. Another example is the online job scheduling in distributed data centers (see, e.g., \cite{hung2015scheduling}), in which a job router receives job tasks and schedules them to different servers to fulfill the service. Each server purchases power (within its capacity) from its zone market, used for serving the assigned jobs. Electricity market prices can vary significantly across time and zones, and the goal is to minimize the electricity cost subject to the constraint that incoming jobs must be served in time.

It is indeed possible to adjust any training algorithms capable of dealing with one objective loss to deal with multiple objectives by assigning a positive weight to each loss function. However, this modification turns out to be a  difficult problem, especially in the case where one has to maintain the constraints below a given threshold online.

Recently, several papers have put their focus on dealing with the multiple objective cases in the online setting. In the adversarial setting, it is known that multiple-objective is generally impossible when the constraints are unknown a-priory \cite{MannorTY2009}. In the stochastic setting,  \cite{MahdaviYJ2013}   proposed a framework for dealing with multiple objectives in the  i.i.d. case and \cite{uziel2017multi} has extended the above to the case where the underlying process is stationary and ergodic. 

The previous approaches, however, focused mainly on the online training of shallow models in this context.  Therefore, the main goal of this paper is to propose an online  training procedure capable of controlling several objectives simultaneously utilizing deep learning models, which  have witnessed tremendous  success  in a wide range of offline (batch) machine learning tasks applications \cite{lecun2015deep,bengio2013representation,he2016deep,krizhevsky2012imagenet}.

The training of deep neural networks is considered to be hard due to many challenges arising during the process.  For example,   vanishing  gradient,  diminishing  feature reuse \cite{he2016deep}, saddle  points,  local  minima  \cite{choromanska2015loss,dauphin2014identifying}, difficulties in choosing a good regularizer choosing hyperparameters.

Those challenges are even harder to tackle in the online regime, where the data is given sequentially, and because the data might not be stationary and exhibit different distributions at different periods (concept drift)\cite{gama2014survey}. Despite the challenges, having a way to learn a deep neural network in an online fashion can lead to more scalable,  memory efficient, and better performance of deep learning models in online learning tasks.

There are several approaches on how to train deep learning models in the online setting; first, we have the naïve approach: directly applying a standard Back-propagation training on a single instance at each online round. The main critical problem in this approach is the lack of ability to tune the model (e.g., choosing the right architecture and the hyper-parameters) in online-manner and in the case of composed loss to tune the weights on each of the losses. Since, in the online case, we remind, validation sets do not exist.

A different approach, which is used mainly in non-stationary data streams such as online portfolio selection, is to set two sliding windows, one for training and the other one for testing (see, e.g., \cite{jiang2017deep}).  After training the model and testing it, the sliding window is moved to the next period and trained all over again. It relays on the assumption that close data points should exhibit the same distribution.  This method suffers from a significant reduction of the training data, thus, pruned to overfitting and is less suitable for cases where real-time prediction.

Another approach exploits the principle of "shallow to deep."  This is based upon the observation that shallow models converge faster than deeper ones. This approach has been exploited in the offline batch setting, for example, using the function preservation principle \cite{goodfellow2014explaining,koh2017understanding}, and modifying the network architecture and objective functions,  e.g.,  Highway  Nets\cite{srivastava2015highway}. Those methods which exhibit improve convergence for the offline setting turned, as demonstrated by \cite{Sahoo2018},  not to be suitable for the online setting as the inference is made by the deepest layer which requires substantial time for convergence.

Deeply Supervised Nets \cite{lee2014deeply} are another way to implement the above principle. This method incorporates companion objectives at every layer, thus addressing the vanishing gradient and enabling the learning of more discriminative features at shallow layers. \cite{Sahoo2018} recently adjusted this approach to the online setting by setting weights on those layers and optimizing the weights in an online manner using the hedge algorithm.

In this paper, we follow the latter approach, as well. By modifying a given network's architecture, we modify the multi-objective problem into a convex one. Thus, allowing the use of the strong duality theorem. By doing so, we transform the problem into a minimax optimization problem by introducing the corresponding Lagrangian. Afterward, by a three-step optimization process which is responsible both for optimizing the primal and the dual variables and the weight's of the networks themselves, we can perform as well as the best layer in hindsight while keeping the constraints bounded below a given threshold. While we discuss here the case of only two objective functions, our approach can be extended easily to any arbitrary number of objective functions.

The paper is organized as follows: In Section~\ref{sec:formulation}, we define the multi-objective online optimization framework. In Section~\ref{sec:deep}, we present our deep training algorithm and prove that under mild conditions, it produces predictions for which the constraints hold. In Section~\ref{sec:empirical}, we demonstrate our algorithm on $3$ public benchmark datasets aiming to control the type-I error.

\section{Problem Formulation}
\label{sec:formulation}

We consider the following prediction game. At the beginning of each round, $t = 1, 2, \ldots$, the player receives  an observation $\obs_t \in \cX \subset \nr^d $ generated from an i.i.d. process.   The player is required to make a prediction $\cho_t\in \choS $, where $\choS \subset \nr ^m$ is a compact and convex set,  based  on past observations.   After making the prediction $\cho_t$, the label  $y_t \in \mathcal{Y}  \subset \nr^d$  is revealed and the player suffers  two losses, $\ml(\cho_t,y_t)$  and  $\cl(\cho_t,y_t)$, where $\ml$ and $\cl$ are  real-valued continuous functions and  convex w.r.t. their first argument. 


The  player is using a deep  neural network, $\strategy_w: \cX \rightarrow \choS$, parameterized with $w\in \nr^W$. We view  the player's prediction strategy as a sequence 
$\Strategy \eqdef \{\strategy_{w_t}\}^\infty_{t=1}$ of forecasting functions 
$\strategy_{w_t} : \cX \rightarrow \cY$;   that is,
the player's prediction  at round $t$ is given by $\strategy_{w_t}(\obs_t)$. 

 The player is required therefore to play the game with a strategy that minimizes the average $\ml$-loss,  $\frac{1}{T}\sum_{t=1}^{T}\ml(\strategy_{w_t}(x_t),y_t)$, while keeping the average $\cl$-loss $\frac{1}{T}\sum_{t=1}^{T}\cl(\strategy_{w_t}(x_t),y_t)$  bounded below a prescribed threshold $\gamma$.
 
 As in typical online learning problems \cite{helmboldSSW1998,CesaL2006}, the goal of the learner is to perform as good as $w^*$ satisfying the following  optimization problem:


\begin{equation}
\label{minprob}
\begin{aligned}
& \underset{w \in \nr^W}{\text{minimize}}
& & \nE \left[ \ml(\strategy_w(x),y) \right]
& \text{subject to}
& & \nE \left[ \cl(\strategy_w(x),y)\right] \leq \gamma, 
\end{aligned}
\end{equation}

Since the player is using a neural network, it is probable to   assume that Problem~\ref{minprob} is  not convex.  Therefore, as  in \cite{Sahoo2018,lee2015}    we modify  the network's architecture as follows: 
we denote the network's hidden layers  by $l_1,\ldots,l_L$ ($l_L$ is the output layer of $\strategy_w$),  and  attach an  output layer to each one of them, resulting in  $L$ predictions at each round, $\strategy_t(\obs_t) \eqdef \left(\strategy^1_{w_t}(\obs_t),\ldots ,\strategy^L_{w_t}(\obs_t )\right)$.


In other words, every output layer can be regarded as  an expert and can be seen as a strategy on its own. Therefore, by assigning a probability weight to the experts $p\in \Delta_L$, where $\Delta_L$ is the $L$-dimensional probability simplex we can modify the problem into the following one:\footnote{The construction above do not depend on the optimization  on the weights}

\begin{equation}
\label{minprob2}
\begin{aligned}
& \underset{p \in \Delta_L}{\text{minimize}}
& & \nE \left[ \ml( \langle p,\strategy_i(\obs_t) \rangle,y_t) \right]
& \text{subject to}
& & \nE \left[ \cl(\langle p,\strategy_t(\obs_t) \rangle,y_t)  \right] \leq \gamma.
\end{aligned}
\end{equation}   

Note that Problem~\ref{minprob2} is now a convex minimization problem over $\Delta_L$. However, there is no guarantee that a feasible solution even exists. In many real-world problems, one can come up with simple experts satisfying the constraints.
For example, in the Neyman-Pearson classification, one can use the strategy of always predicting the $0$-label class, resulting in $0$ type-I error, and thus satisfying any (reasonable) constraint on the type-I error. Another example is in the online portfolio selection, where adding a strategy of investing only in cash results in zero risk trading.  Thus, we assume that there exists such a simple expert, denoted by $\strategy^0$, satisfying the constraints and we artificially add him to the problem (resulting in that now the problem is minimized over $\Delta_{L+1}$). 

By the addition of this expert, we  require that the player minimizes the main loss, while the average violation of the constraint is bounded as follows:

\begin{gather*}
 \frac{1}{T}\sum_{t=1}^{T}\cl(\langle p_t,\strategy_t(\obs_t) \rangle,y_t)  \leq \gamma  + O(\frac{1}{T}),
\end{gather*}

Moreover, now Slater's condition holds, and the problem  is equivalent to finding  the saddle point of the Lagrangian function \cite{BenN2012}, namely,

\begin{gather*}
\min_{p \in \Delta_{L+1}}\max_{\lambda \in \nr^+}\mathcal{L}(p,\lambda),
\end{gather*}
where the Lagrangian is
\begin{gather*}
\mathcal{L}(p,\lambda)\eqdef \nE \left[ \ml(\langle p,\strategy_t(\obs_t) \rangle,y_t)  \right] +\lambda\left( \nE \left[ \cl(\langle p,\strategy_t(\obs_t) \rangle,y_t) -\gamma  \right)  \right] .
\end{gather*}
We denote the  optimal dual by $\lamoptim$. Moreover, we set a constant\footnote{This can be done, for example, by imposing some regularity conditions on the  objectives (see, e.g., \cite{MahdaviYJ2013}).} $\lambda_{\max}\geq 1$ such that $\lambda_{\max}>\lamoptim$, and set $\Lambda \eqdef [0,\lambda_{\max}]$.
We also define the \emph{instantaneous Lagrangian function}   as 
\begin{equation}
\label{eq:l_loss}
l(p,\lambda,y_t) \eqdef \ml(\langle p,\strategy_t(\obs_t) \rangle,y_t)+\lambda\left(\cl(\langle p,\strategy_t(\obs_t) \rangle,y_t)-\gamma\right).
\end{equation}

Summarizing the above, regardless of the neural network's architecture and on how it is trained, we were able to turn the optimization problem into a convex one, and by using the strong duality theorem, we turned the multi-objective problem into a minimax problem. 

In the next section, we present our algorithm, designed to jointly find the minimax point between the experts and optimizing the network's layers. 

\section{Deep Minimax Exponentiated Gradient}
\label{sec:deep}
\begin{algorithm}[tb]
	\caption{Deep Minimax Exponentiated Gradient (DMEG)  }
	\label{alg:algo}
	\begin{algorithmic}[1]
		\STATE {\bfseries Parameters:} $\eta,\eta^\lambda >0$:
		\STATE {\bfseries Initialize:} $\lambda_0 =0, p_0 = (\frac{1}{L+1},...,\frac{1}{L+1}) ,\hat{p}_0 = (\frac{1}{L+1},...,\frac{1}{L+1})  $.
		\FOR{ each period $t$ } 
				\STATE {\bf Receive}  context $x_{t}$
		\STATE {\bf Compute} predictions $\strategy_t(x_{t})$ of the experts
		\STATE {\bf Play} $(p_t,\lambda_t)$ 
		\STATE {\bf Suffer}  loss $l(\langle p_t, \strategy_t(x_{t}) \rangle,\lambda_t,y_t)$
 
		
		\STATE  {\bf Update  experts' weights:} 
		\begin{equation*}
		\mathbf{\hat{p}_{t+1,k}} \leftarrow \hat{p}_{t,k} \exp \left(-\eta \sum_{i=1}^t \nabla_k l(\langle p_t, \strategy_t(x_{t}) \rangle,\lambda_i, y_i) \right) \quad k = 1,\ldots,L+1
		\end{equation*}
		\begin{equation*}
		\mathbf{p}_{t+1} \leftarrow  \frac{\hat{p}_{t+1}}{\norm{\hat{p}_{t+1}}_1}
		\end{equation*}
		\STATE  {\bf Update $\lambda$:} 
		\begin{equation*}
		\mathbf{\lambda}_{t+1} \leftarrow \lambda_{max}\frac{\exp \left(\eta^\lambda \sum_{i=1}^t \nabla_{\lambda} l(\langle p_t, \strategy_t(x_{t}) \rangle,\lambda_i, y_i) \right)}{1+ \exp \left(\eta^\lambda \sum_{i=1}^t \nabla_{\lambda}l(\langle p_t, \strategy_t(x_{t}) \rangle,\lambda_i, y_t)\right)}
		\end{equation*}
		\STATE  {\bf Update $\mathbf{w}_{t+1}$:} 
		\begin{equation*}
		\mathbf{w}_{t+1} \leftarrow Backprop(l(\langle p_t, \strategy_t(x_{t}) \rangle,\lambda_t,y_t),\mathbf{w}_{t}) 
		\end{equation*}
		\ENDFOR
		
	\end{algorithmic}
\end{algorithm}

We now turn to present our algorithm Deep Minimax Exponentiated Gradient (DMEG), designed for jointly optimizing the network's parameters, $w$, and tuning the minimax weights between the different experts, $p$ and $\lambda$. The algorithm is outlined  at Algorithm~\ref{alg:algo}; at the beginning of each round the algorithm receives an observation, and the corresponding predictions of the different experts and predicts $p_t,\lambda_t$ (lines 4-5). Afterwards, the label $y_t$ is revealed and  the algorithm  suffers the instanious Lagrangian loss   (lines 6-7). Then after receiving the loss, the algorithm  preforms a three-step optimization where the first and second step (lines 8 and 9 respectively) are in place to ensure converges to the minimax solution of Problem~\ref{minprob2}. The last step is the backpropagation step whose role is to update the weights of the neural network, and thus improving the performance of the experts (line 10). The optimization steps are described in detail below:

\paragraph{Updating the experts' weights}
Using the  "shallow to deeper" observation discussed before, the shallow experts may converge faster and exhibit better results than the deeper layers at early stages, and at later stages, deeper experts might exhibit the better results.  Thus, we aggregate the experts' prediction using the well-known expert learning algorithm Exponentiated Gradients (EG) \cite{helmboldSSW1998}, this algorithm punishes the weights of the experts in a proportional way to the loss they incur during the training period.

\paragraph{Updating $\lambda$}
Since we are looking for a minimax solution, we need to update the lambda parameter alternately, and after the update of $p$, we update $lambda$ as well. The updates are in place to ensure the maximization of the Lagrangian. This is done  using again the EG algorithm applied over two static experts, one predicting $0$ and the other one predicting $\lambda_{max}$.

\paragraph{Updating the neural network}

Independently of the two above stages, we optimize the prediction of the neural network based on previous weights and lambda, and thus, we perform a back propagation step using the weighted Lagrangian. The optimization can be done using any traditional deep learning optimizer.

The prediction of the network is a weighted average of all the predictions of the different experts $\langle p_t, \strategy_t(x_{t}) \rangle$. Therefore, the total loss of the network which is used for the backpropagation step is given by 
\begin{gather*}
    l(\langle p_t, \strategy_t(x_{t}) \rangle,\lambda_t,y_t).
\end{gather*}

The advantages of such integrated optimization are threefold: first, as we prove in the next subsection, it ensures that the predictions of the network satisfy the constraints. Second, the EG algorithm between the experts tunes the model towards the best performing expert and third by the unique structure of the network the layers share information between them.

\subsection{Theoretical guarantee}
In this subsection, we state and prove that our procedure which aggregates between the different predictors ensures that the predictions generated by $\algo$   the violations of the constraints will be bounded. We now state and prove our results; we note that we do not assume anything regarding how $(x_t,y_t)$ are generated. Since we always can scale a bounded loss function, we will assume that $l$ takes values in $[0,1]$  
 \begin{thm}
 	Let $\cho_1,\ldots,\cho_T$, where $b_t = \langle p_t, \strategy_t(x_{t}) \rangle$   be the predictions generated by $\algo$ when applied over an arbitrary neural network $\strategy_w$. Denote $G_1$,$G_2>0$ to be bounds on the derivatives w.r.t. the first and the second argument of $l$ respectively, then if we set $\eta = \frac{1}{G_1}\sqrt{\frac{\log{L+1}}{T}}$ and $\eta^\lambda = \frac{1}{G_2}\sqrt{\frac{\log{2}}{T}}$ then  the following holds:
 	\begin{gather}
 	\label{eq:G1}
	\frac{1}{T}\sum_{t=1}^{T}\cl(\cho_t,y_t)   \leq \gamma +  4\max(G_1,G_2)\sqrt{\frac{\log{(L+1)}}{T}}.
 	\end{gather}
 \end{thm}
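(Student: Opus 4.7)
The plan is to run a standard two-sided exponentiated-gradient (EG) regret analysis on the instantaneous Lagrangian $l(p,\lambda,y_t)$ and then evaluate the resulting inequalities at two carefully chosen comparators that isolate the constraint violation. Observe first that for every fixed $\lambda_t,y_t$, the map $p\mapsto l(p,\lambda_t,y_t)$ is convex, because $\ml$ and $\cl$ are convex in their first argument and $p\mapsto\langle p,\strategy_t(\obs_t)\rangle$ is linear; symmetrically, $\lambda\mapsto l(p_t,\lambda,y_t)$ is linear and hence concave. The line-8 update is therefore EG on the simplex $\Delta_{L+1}$ with gradient-norm bound $G_1$, and the line-9 update is EG on the two-point simplex $\{0,\lambda_{\max}\}$ with per-coordinate gradient bound $G_2$ under the reparametrization $\lambda=\lambda_{\max}p^\lambda$. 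Substituting the stated $\eta=G_1^{-1}\sqrt{\log(L+1)/T}$ and $\eta^\lambda=G_2^{-1}\sqrt{\log 2/T}$ into the textbook EG regret inequality yields, for every $q\in\Delta_{L+1}$ and every $\lambda\in[0,\lambda_{\max}]$,
\begin{gather*}
\sum_{t=1}^T l(p_t,\lambda_t,y_t)-\sum_{t=1}^T l(q,\lambda_t,y_t)\le G_1\sqrt{2T\log(L+1)}, \\
\sum_{t=1}^T l(p_t,\lambda,y_t)-\sum_{t=1}^T l(p_t,\lambda_t,y_t)\le G_2\sqrt{2T\log 2}.
\end{gather*}

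Next I combine these at targeted comparators. On the primal side take $q=e_0$, the indicator of the safe expert $\strategy^0$ added in Section~\ref{sec:formulation}; by construction $\cl(\strategy^0(\obs_t),y_t)\le\gamma$ pointwise and $\lambda_t\ge 0$, so $l(e_0,\lambda_t,y_t)\le\ml(\strategy^0(\obs_t),y_t)\le 1$ under the normalization $l\in[0,1]$. On the dual side take $\lambda=\lambda_{\max}$. Adding the two inequalities, expanding $l(p_t,\lambda_{\max},y_t)=\ml(\cho_t,y_t)+\lambda_{\max}(\cl(\cho_t,y_t)-\gamma)$ where $\cho_t=\langle p_t,\strategy_t(\obs_t)\rangle$, and dropping the non-negative term $\sum_t\ml(\cho_t,y_t)\ge 0$ gives
\begin{gather*}
\lambda_{\max}\sum_{t=1}^T\bigl(\cl(\cho_t,y_t)-\gamma\bigr)\le T+G_1\sqrt{2T\log(L+1)}+G_2\sqrt{2T\log 2}.
\end{gather*}
Dividing by $\lambda_{\max}T$ and consolidating both regret terms against the common rate $\max(G_1,G_2)\sqrt{\log(L+1)/T}$ produces the claimed bound, the constant $4$ absorbing the $\sqrt{2}$ factor and the $1/\lambda_{\max}$ residual (for $\lambda_{\max}$ chosen large enough to be dominated by the $O(\sqrt{\log(L+1)/T})$ slack).

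The principal obstacle I anticipate is that the per-round loss depends on $\strategy_t(\obs_t)$, which is itself being adjusted by the line-10 backprop step in a way that is, from the EG analysis's perspective, adversarial and non-stationary. Fortunately the EG regret inequality is a worst-case, per-round bound requiring only convexity of each round's loss in the decision variable and a uniform gradient bound, both of which are supplied by the hypothesis that $G_1,G_2<\infty$, so the bound is completely indifferent to how $\strategy_t$ evolves. The more delicate point is the $1/\lambda_{\max}$ residual: since the theorem's stated bound carries no explicit $\lambda_{\max}$ dependence, one must either set $\lambda_{\max}$ to grow with $T$ (for instance $\lambda_{\max}=\Theta(\sqrt{T/\log(L+1)})$, with a doubling trick if $T$ is unknown) or verify that this term is already dominated by the $4\max(G_1,G_2)\sqrt{\log(L+1)/T}$ slack; this bookkeeping is where I expect the exact constants to be pinned down.
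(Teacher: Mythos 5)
Your proposal follows essentially the same route as the paper's proof: two EG regret bounds (one over $\Delta_{L+1}$ for $p$, one over the two-point dual simplex for $\lambda$), summed to get a saddle-point regret valid for every comparator pair, then evaluated at the safe expert $\strategy^0$ on the primal side and a fixed scalar on the dual side. The one substantive divergence is the dual comparator: the paper sets $\lambda=1$, while you set $\lambda=\lambda_{\max}$ and divide through by it. Your choice is the stronger one, and it also makes visible something the paper's proof hides. After choosing comparators, both arguments are left with the primal-loss residual $\frac{1}{T}\sum_t\bigl(\ml(\strategy^0(\obs_t),y_t)-\ml(\cho_t,y_t)\bigr)$, which is $O(1)$ and does not vanish with $T$. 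The paper's proof simply drops the $\ml$ terms when passing from the saddle-point regret to its penultimate display, so its stated bound is obtained only by this silent cancellation; your version converts the same residual into an explicit $1/\lambda_{\max}$ term and correctly observes that matching the theorem's $\lambda_{\max}$-free bound requires $\lambda_{\max}=\Omega\bigl(\sqrt{T/\log(L+1)}\bigr)$ (or some equivalent device). So your bookkeeping is more honest than the paper's: the gap you flag at the end is real, it is present in the paper's own argument in a different guise, and your $\lambda=\lambda_{\max}$ route is the standard way (as in the Mahdavi et al.\ analysis the paper cites) to quantify and then suppress it. The only minor loose end on your side is the step $\sum_t\ml(\cho_t,y_t)\ge 0$, which needs the individual losses (not just the Lagrangian $l$) to be nonnegative, but that is consistent with the paper's normalization.
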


\begin{proof}
	
	Using the guarantees of the EG algorithm \cite{helmboldSSW1998} we get that for every  $p \in \Delta_{L+1}$: 
	\begin{gather}
	\frac{1}{T}\sum_{t=1}^{T} l(\cho_t,\lambda_t,y_t) - \frac{1}{T}\sum_{t=1}^{T} l(\langle p, \strategy_t(x_{t}) \rangle,\lambda_t,y_t) \leq 2G_1\sqrt{\frac{\log{(L+1)}}{T}}.  \label{eq:p1}
	\end{gather}
	Using the guarantees of EG applied over the $lambda$ updates we get that for every $\lambda \in \Lambda$ 
		\begin{gather}
		\frac{1}{T}\sum_{t=1}^{T} l(\cho_t,\lambda,y_t) - \frac{1}{T}\sum_{t=1}^{T} l(\cho_t,\lambda_t,y_t) \leq 2G_2\sqrt{\frac{\log{(2)}}{T}}. \label{eq:p2}
	\end{gather}

Summing Equations \eqref{eq:p1}+\eqref{eq:p2} we get that for every choice of $(p,\lambda) \in \Delta_{L+1} X \Lambda$.
		\begin{gather}
		\frac{1}{T}\sum_{t=1}^{T} l(\cho_t,\lambda,y_t) - \frac{1}{T}\sum_{t=1}^{T} l(\langle p, \strategy_t(x_{t}) \rangle,\lambda_t,y_t)\leq 4\max(G_1,G_2)\sqrt{\frac{\log{(L+1)}}{T}}.\label{eq:psum}
\end{gather}

Now, If we set $\lambda = 1$, and choose $p$ such that all the probability mass is on  the artificial expert then by using again  Equation~\eqref{eq:psum}  we get the following  
\begin{gather}
 \frac{1}{T}\sum_{i=1}^{T} \cl(\cho_i,x_i)- \frac{1}{T}\sum_{i=1}^{T}\cl(\langle p, \strategy_t(x_{t}) \rangle,x_i) \leq   4\max(G_1,G_2)\sqrt{\frac{\log{(L+1)}}{T}},
\end{gather}
using the guarantees of the artificial expert we get that  Equation~\eqref{eq:G1} holds.
\end{proof}

\section{Empirical Results}
\label{sec:empirical}

In this section, we demonstrate the ability of $\algo$ to online learn deep neural networks while controlling a given constraint; we experiment with our algorithm on the Neyman-Pearson classification defined below.

\subsection{Neyman-Pearson classification}
In the classical binary classification, the learner's goal is to minimize the classification error. This measure does not distinguish between type-I and type-II errors.  The Neyman-Pearson (NP) paradigm, is an extension of the classical binary classification in the sense that now the learner minimizes the type-II error while upper bounding the type-I  error, in other words, the NP classification is suitable for the cases when one error is more pricey than the other.  For instance, failing to detect a  tumor has far more severe consequences than wrongly classifying a healthy area; other examples include spam filtering, machine monitoring.
Therefore,  it is needed to put the priority on controlling the false negative rate, and for a given bound $\alpha$ on the type-I error. For a classifier $h\in\mathcal{H}$, whose (discrete)  predictions are denoted by $\hat{h}(x)$ we define the NP problem as follows:
\begin{equation}
\label{minprob3}
\begin{aligned}
& \underset{h\in\mathcal{H}}{\text{minimize}}
& & \nE\left[\hat{h}(x)  \neq y \mid y=0\right] 
& \text{subject to}
& & \nE\left[\hat{h}(x) \neq y \mid y=1\right]  \leq \gamma, 
\end{aligned}
\end{equation} 

As in classical classification, one approach  to solve the Neyman-Pearson classification is by using convex surrogate losses (see, e.g., \cite{rigollet2011neyman}). Therefore, for a convex surrogate loss function $R(\cdot,\cdot)$ we get the following convex problem: 

\begin{equation}
\label{minprob3}
\begin{aligned}
& \underset{h\in\mathcal{H}}{\text{minimize}}
& & \nE\left[ R(h(x),y) \mid y=0\right] 
& \text{subject to}
& & \nE\left[R(h(x),y) \mid y=1\right]  \leq \gamma, 
\end{aligned}
\end{equation} 

During our experiments we used the binary cross entropy to serve as the surrogate loss function.

We used the following binary classification datasets, all are  available in the public domain.


\paragraph{Susy}
Susy problem actualizes a big streaming data problem consisting of $5$ million records. This dataset was produced by Monte-Carlo simulations and is the classification of signal processes generating super-symmetric particles \cite{baldi2014searching}. 
Each instance is represented by  $18$ input attributes where the first eight input attributes present the kinematic properties while the remainders are simply the function of the first eight input features. 

\paragraph{Higgs}

This classification problem aims to distinguish between a signal process which produces Higgs bosons and a background process which does not. The features contain kinematic properties measured by the particle detectors in the accelerator and high-level features derived by physicists to help discriminate between the two classes \cite{baldi2014searching}.

\paragraph{CD}
The concept drift dataset \cite{Sahoo2018}  contains three different segments, each comprising a third of the data. All the three segments were generated from an 8-hidden layer network; this challenging concept drift dataset is a part of a broader machine learning task aiming to deal with concept drift.

The properties of these public datasets are summarized in Table~\ref{tab:dataset}.

\begin{table}[t]
	\caption{Properties of the datasets}
	\label{tab:dataset}
	\vskip 0.15in
	\begin{center}
		\begin{sc}
			\begin{tabular}{l||lll}
				\hline
				
				
				Dataset & Length & Features & Type    \\
				\midrule

				SUSY     & $5$M & $28$ & Stationary \\        
				
				Higgs     & $5$M & $18$    &Stationary  \\

				CD     & $4.5$M & $50$   & Concept drift  \\
				
				\hline
			\end{tabular}
		\end{sc}
	\end{center}
	\vskip -0.1in
\end{table}

\begin{figure}[!htb]

    \caption{Trade-off between the type-II (Y-axis) error and type-I (X-axis)}
    \label{fig:con}
	\minipage{0.32\textwidth}
	\centering
	\includegraphics[width=\linewidth,height=1.6in]{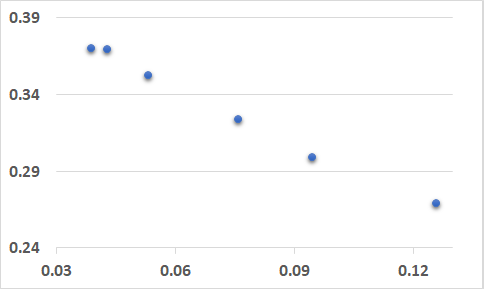}
    \vspace{\abovecaptionskip}%
    CD
	\endminipage\hfill
	\minipage{0.32\textwidth}
	\centering
	\includegraphics[width=\linewidth,height=1.6in]{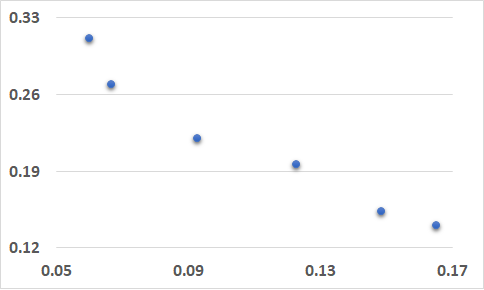}
    \vspace{\abovecaptionskip}%
    Susy
	\endminipage\hfill
	\minipage{0.32\textwidth}%
	\centering
	\includegraphics[width=\linewidth,height=1.6in]{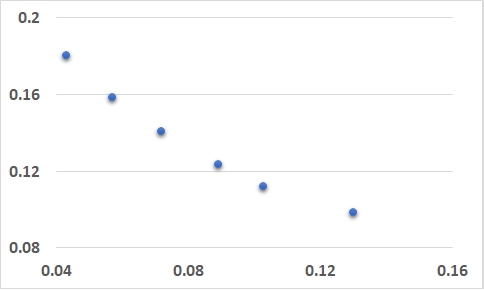}
    \vspace{\abovecaptionskip}%
    Higgs
	\endminipage
\end{figure}

\begin{figure}[!htb]
    \caption{The average constraint for $\algo$ with $\gamma=0.21$}
    \label{fig:bound}
	\minipage{0.32\textwidth}
	\centering
	\includegraphics[width=\linewidth,height=1.6in]{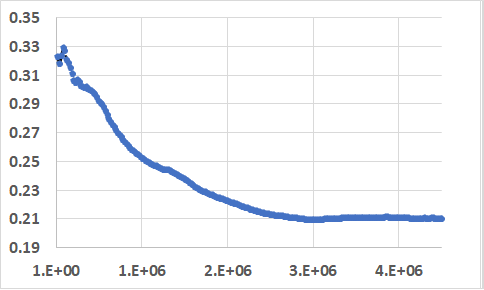}
    \vspace{\abovecaptionskip}%
    CD
	\endminipage\hfill
	\minipage{0.32\textwidth}
	\centering
	\includegraphics[width=\linewidth,height=1.6in]{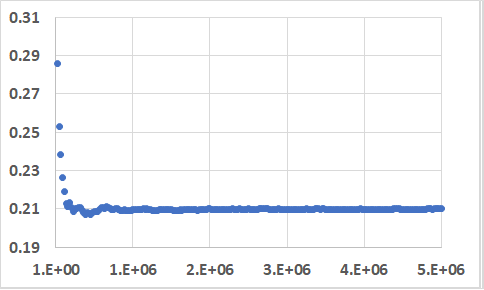}
    \vspace{\abovecaptionskip}%
    Susy
	\endminipage\hfill
	\minipage{0.32\textwidth}%
	\centering
	\includegraphics[width=\linewidth,height=1.6in]{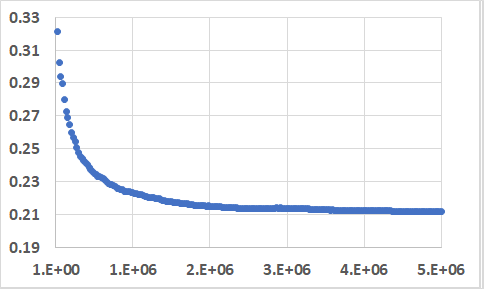}
    \vspace{\abovecaptionskip}%
    Higgs
	\endminipage
\end{figure}

\begin{table*}[htb!]

	\centering
	\caption{ Type-I and the Type-II (in parentheses) of $\algo$ with $\gamma=0.2$} \label{Table:Empirical-Results}
	

		\begin{sc}
			\setlength\tabcolsep{2.5pt}
			\begin{tabular}{l||lll||l l}
				\hline

				
				 & $\phantom{ff}$ BL & $\phantom{ff}$HBP  & $\phantom{ff}$MOL & $\phantom{ff}$BE  &   $\algo$ \\
				\midrule

				 SUSY         & 
				$.121$ $(.09)$  &$.11$ $(.093)$ & $.081$  $(.163)$&  $.071$ $(.136)$& $.073$ $(.14)$ \\        
				  HIGGS         & $.172$ $(.155)$ & $.161$ $(.152)$ & $.097$ $(.326)$ &  $.090$ $(.215)$ & $.091$ $(.218)$\\
				   CD           & $.217$ $(.208)$ & $.194$ $(.219)$ & $.096$ $(.370)$ & $.051$  $(.352)$ & $.052$  $(.353)$ \\

				\midrule

			\end{tabular}
			
		\end{sc}

	
\end{table*}

\subsection{Implementation and results}

To apply $\text{\algo}$ strategy,  we used a $20$ layer DNN, as in \cite{Sahoo2018}, with $100$ units in each hidden layer and with ReLU Activation. For each one of the $19$ hidden layers layer a clasifiier (a fully-connected layer followed by softmax activation) was attached, resulting in $19$ classifiers (experts), each with depth ranging from $2,\ldots,20$. The main objective of our experiments is to examine how well $\text{\algo}$  maintains the type-$I$ error constrains. The second objective is to examine the ability of $\text{\algo}$ to track the best expert.  

The inclusion of the artificial expert is rather for theoretical purposes, in order to ensure that Slater's condition holds for Problem~\ref{minprob2}. In our experiments, however,  we did not add this expert at all, and as we present later on we were still able to fulfill the constraints.

For the implementation of $\text{\algo}$, we set $\eta=0.01$ and $\eta^\lambda=0.01$ without prior tuning. The network was trained using a fixed learning rate during all the training rounds and across all the experiments and set to $0.001$, the optimization was done using Nesterov accelerated gradient. 

We used the following baselines:
\begin{itemize}

    \item  BL - the best result obtained by running instances of a fully connected DNNs with  layers ranging from (2,3,4,8,16) and with the same parameters as ours, all were trained using a learning rate of $0.01$ Nesterov accelerated gradient. 
    \item   HBP - an algorithm designed for the deep online learning setting, with the same architecture and hyper-parameters as described in \cite{Sahoo2018}.
    \item MOL - an implementation of a shallow online multi-objective model with parameters $\eta=0.01,\eta^\lambda=0.01$ \cite{MahdaviYJ2013}.
    \item BE - the best performing expert of our model. The expert with the lowest Type-II error among the experts who satisfied the constraints.   
\end{itemize}

All the experiments were implemented using  Keras \cite{chollet2015keras}.

In Table~\ref{Table:Empirical-Results}, we present the type-I error and the type-II error (in parentheses) of the different algorithms where for $\algo$ we set $\gamma = 0.2$.  It is the place to note that since we have convexified the NP classification problem, we expect to get  type-I error lower than $\gamma$. First, as we can observe,  all the existing methods of training deep neural networks online, training algorithms cannot take into consideration the constraint. Second, we can observe that our algorithm, as expected, can maintain the constraint across all the datasets. Moreover, by comparing the results of $BE$, the best expert of our model, we can see that we performed nearly as the best expert both in terms of type-I error and type-II error, emphasizing the ability of our algorithm can track the best performing layer. Together with the superior performance over $BL$, proves the usefulness of $\algo$ in tuning the model for the problem. The inferior performance of MOL comparing to $\algo$, manifesting the need in training deep online models with constraints. 

In another set of experiments, we checked how well our procedure could control the Type-I error, when $\gamma$ is changed. 
 Therefore for each dataset we run $6$ instances of $\algo$ with different values of $\gamma \in \{0.15,0.18,0.21,0.24,0.27,0.3\}$.  The results of this are presented at  Figure~\ref{fig:con}. First, we can observe that $\algo$ successfully maintained the constraints. Second,  we observe the inevitable tradeoff between the type-I and the Type-II error, which forms the shape of a concave Pareto-frontier. Table~\ref{tab:cvar} shows the average type-I error of all those instances across all the datasets and over different duration in  the training period.

Figure~\ref{fig:bound} presents the average constraint function across the $3$ datasets, for $\gamma=0.21$. As we can see, our algorithm well maintains this constraint across all the datasets during the training period.

\begin{table}[t]
	\caption{Average type-I error of $\algo$ with different values of $\gamma$ and at different stages}
	\label{tab:cvar}
	\vskip 0.15in
	\begin{center}
		\begin{small}
			
			\begin{sc}
				\begin{tabular}{l| l||llllll}
					\hline
					
					
					$\phantom{aa}$ Stage &Dataset & $\algo_{.15}$ & $\algo_{.18}$ & $\algo_{.21}$  & $\algo_{.24}$ &  $\algo_{.27}$ & $\algo_{.3}$     \\
					\midrule

					&SUSY     & $0.044$ & $0.057$  & $0.069$ & $0.087$ & $0.101$ & $0.122$  \\        
					
					$ 0\%-25\%$ &HIGGS    & $0.072$ & $0.094$  & $0.097$ & $0.126$ & $0.151$ & $0.178$  \\
					&CD     & $0.051$ & $0.059$  & $0.071$ & $0.105$ & $0.138$ & $0.182$  \\ 
										\midrule

					&SUSY     & $0.043$ & $0.058$  & $0.073$ & $0.092$ & $0.105$ & $0.127$  \\        
					
					$ 25\%-50\%$ &HIGGS & $0.066$ & $0.080$  & $0.087$ & $0.119$ & $0.144$ & $0.162$    \\
					&CD     & $0.055$ & $0.058$  & $0.072$ & $0.103$ & $0.127$ & $0.153$  \\ 
										\midrule

					&SUSY     & $0.042$ & $0.056$  & $0.072$ & $0.090$ & $0.102$ & $0.125$  \\        
					
					$ 50\%-75\%$& HIGGS     & $0.062$ & $0.78$  & $0.085$ & $0.104$ & $0.141$ & $0.177$  \\
					&CD     & $0.048$ & $0.053$  & $0.068$ & $0.086$ & $0.106$ & $0.129$ \\ 
										\midrule

					&SUSY     & $0.044$ & $0.058$  & $0.071$ & $0.088$ & $0.101$ & $0.123$  \\        
					
					$ 75\%-100\%$&HIGGS     & $0.064$ & $0.079$  & $0.096$ & $0.124$ & $0.150$ & $0.172$  \\
					&CD     & $0.030$ & $0.052$  & $0.067$ & $0.079$ & $0.103$ & $0.118$ \\ 
				\end{tabular}
			\end{sc}
		\end{small}
		
	\end{center}
	\vskip -0.1in
\end{table}

\section{conclusions}

Training and utilizing deep neural networks in an online learning setting is a challenging task, especially with the need to consider multiple objectives simultaneously. Therefore in this paper, we presented $\algo$, a novel approach to training a neural network on the fly while considering several objectives. Due to the non-convex nature of deep neural networks, we modified the problem in order to utilize the strong duality theorem and the Lagrangian relaxation. We also proved and demonstrated that our algorithm is capable of controlling given constraints on several datasets.

For future research, we wish to investigate further ways to train a neural network in the online setting. By bridging the gap between the approach, we took in this paper and training on the existing dataset (as done in the sliding window approach). On the one hand, the on-the-fly approach presented here gives us the ability the train models in an efficient way but in the cost of not fully optimizing the network and utilizing the data compare to the first approach. Therefore we wish to investigate whether we can devise ways to better trade-off between the two.

\bibliography{Bibliography}
\bibliographystyle{plain}
\end{document}